\documentclass{article}
\usepackage{ijcai26}
\usepackage{times}
\usepackage{soul}
\usepackage{url}
\usepackage[hidelinks]{hyperref}
\usepackage[utf8]{inputenc}
\usepackage[small]{caption}
\usepackage{graphicx}
\usepackage{amsmath}
\usepackage{amssymb}
\usepackage{amsthm}
\newtheorem{theorem}{Theorem}

\usepackage{booktabs}
\usepackage{pifont}
\newcommand{\cmark}{\ding{51}}
\newcommand{\xmark}{\ding{55}}
\usepackage{algorithm}
\usepackage{algorithmic}
\usepackage{lineno}
\usepackage{xcolor}
\usepackage{multirow}
\usepackage{colortbl}
\usepackage{tikz}
\usepackage{pgfplots}
\pgfplotsset{compat=1.18}
\usetikzlibrary{arrows.meta,positioning,shapes.geometric,fit,backgrounds,calc}

\definecolor{symgreen}{RGB}{34,139,34}
\definecolor{symred}{RGB}{180,40,40}
\definecolor{symblue}{RGB}{30,90,180}
\definecolor{symgray}{RGB}{120,120,120}
\definecolor{placeholder}{RGB}{180,60,60}

\newcommand{\SPDE}{0/2}   \newcommand{\SPDM}{0/2}   \newcommand{\SPDH}{0/2}   \newcommand{\SPDT}{0/6~(0\%)}
\newcommand{\SPCE}{0/2}   \newcommand{\SPCM}{0/2}   \newcommand{\SPCH}{0/2}   \newcommand{\SPCT}{0/6~(0\%)}
\newcommand{\SPSRE}{1/2}  \newcommand{\SPSRM}{1/2}  \newcommand{\SPSRH}{0/2}  \newcommand{\SPSRTtot}{2/6~(33\%)}
\newcommand{\SPLLE}{0/2}  \newcommand{\SPLLM}{0/2}  \newcommand{\SPLLH}{0/2}  \newcommand{\SPLLT}{0/6~(0\%)}
\newcommand{\SPSSE}{2/2}  \newcommand{\SPSSM}{2/2}  \newcommand{\SPSSH}{1/2}  \newcommand{\SPSST}{5/6~(83\%)}  
\newcommand{\SPSGSE}{2/2} \newcommand{\SPSGSM}{1/2} \newcommand{\SPSGSH}{1/2} \newcommand{\SPSGST}{4/6~(67\%)} 
\newcommand{\MWPDE}{3/3}   \newcommand{\MWPDM}{3/3}   \newcommand{\MWPDH}{1/2}   \newcommand{\MWPDT}{7/8~(88\%)}
\newcommand{\MWPCE}{3/3}   \newcommand{\MWPCM}{3/3}   \newcommand{\MWPCH}{2/2}   \newcommand{\MWPCT}{8/8~(100\%)}
\newcommand{\MWPSSE}{3/3}  \newcommand{\MWPSSM}{3/3}  \newcommand{\MWPSSH}{2/2}  \newcommand{\MWPSST}{8/8~(100\%)}
\newcommand{\MWPSGSE}{3/3} \newcommand{\MWPSGSM}{3/3} \newcommand{\MWPSGSH}{2/2} \newcommand{\MWPSGST}{8/8~(100\%)}
\newcommand{\FINDE}{1/2}   \newcommand{\FINDM}{1/2}   \newcommand{\FINDH}{2/2}   \newcommand{\FINDT}{4/6~(67\%)}
\newcommand{\FINCE}{2/2}   \newcommand{\FINCM}{1/2}   \newcommand{\FINCH}{2/2}   \newcommand{\FINCT}{5/6~(83\%)}
\newcommand{\FINSSE}{2/2}  \newcommand{\FINSSM}{2/2}  \newcommand{\FINSSH}{2/2}  \newcommand{\FINSST}{6/6~(100\%)}  
\newcommand{\FINSGSE}{2/2} \newcommand{\FINSGSM}{2/2} \newcommand{\FINSGSH}{2/2} \newcommand{\FINSGST}{6/6~(100\%)} 
\newcommand{\ZLDT}{0/35}            
\newcommand{\ZLCT}{0/35}            
\newcommand{\ZLSST}{28/35}          
\newcommand{\ZLSGST}{34/35}         
\newcommand{\ZLDpct}{0\%}
\newcommand{\ZLCpct}{0\%}
\newcommand{\ZLSSpct}{80\%}
\newcommand{\ZLSGSpct}{97\%}

\newcommand{\LSATDpct}{100\%}   \newcommand{\LSATDtot}{15/15}
\newcommand{\LSATCpct}{87\%}    \newcommand{\LSATCtot}{13/15}
\newcommand{\LSATSSpct}{100\%}  \newcommand{\LSATSStot}{15/15}
\newcommand{\LSATSGSpct}{100\%} \newcommand{\LSATSGStot}{15/15}
\newcommand{\AQDpct}{86\%}      \newcommand{\AQDtot}{30/35}
\newcommand{\AQCpct}{89\%}      \newcommand{\AQCtot}{31/35}
\newcommand{\AQSSpct}{66\%}     \newcommand{\AQSStot}{23/35}
\newcommand{\AQSGSpct}{89\%}    \newcommand{\AQSGStot}{31/35}

\title{SymStep: Symbolic Step Verification for Logical Reasoning}

\author{
    Aida Usmanova$^1$
    \and
    Rui Gao$^2$\and
    Dilshod Azizov$^2$\and
    Ricardo Usbeck$^1$\And
    Zangir Iklassov$^2$\\
    \affiliations
    $^1$Leuphana University of Lüneburg\\
    $^2$Mohamed bin Zayed University of Artificial Intelligence\\
    \emails
    \{aida.usmanova, ricardo.usbeck\}@leuphana.de,
    \{rui.gao, dilshod.azizov, zangir.iklassov\}@mbzuai.ac.ae
}

\begin{document}
\maketitle

\begin{abstract}
Chain-of-thought (CoT) prompting can fail severely on constraint-dense logical reasoning tasks, where unverified errors accumulate silently across steps.
We introduce \textbf{SymStep}: an LLM makes one atomic claim at a time (\texttt{DEDUCE: Alice, pet, Cat}), then a lightweight constraint propagator checks the claim for consistency with prior accepted deductions, rejects contradictions, and cascades implied facts automatically.
\textbf{SymStep+G} additionally provides an MRV guidance after each accepted step, directing the LLM toward the most constrained unresolved variable.
On a 35-puzzle retained subset of \textbf{ZebraLogicBench}---benchmark of 1,000 Einstein-style logic puzzles---Direct and CoT both achieve \textbf{0\%}, while SymStep+G reaches \textbf{97\%}.
On \textbf{AR-LSAT} analytical reasoning problems, SymStep achieves \textbf{100\%} vs.\ CoT's 87\%.
On \textbf{LGP-14}, SymStep+G achieves \textbf{100\%} vs.\ \textbf{0\%} for CoT and Logic-LM, the strongest prior symbolic$+$LLM baseline we compare against.
Ablation studies reveal that MRV guidance is a key mechanism for reducing directionless cycling, while consistency checking provides a safety net against explicit contradictions.
Across six benchmarks spanning five task domains, SymStep variants match or exceed every baseline on constraint-dense and arithmetic tasks. Experiments on AQUA-RAT algebra confirm the advantage is constraint-density-specific.
\footnote{Code and LGP-20 benchmark available at: \\ \url{https://github.com/Zangir/SymStep}}
\end{abstract}

\section{Introduction}
\label{sec:intro}

Multi-step logical reasoning remains an open problem for Large Language Models (LLMs)~\cite{beygi2022logical,patel2024multi}. Current reasoning approaches perform poorly on relevant benchmarks:
On ZebraLogicBench~\cite{lin2025zebralogic}, a benchmark of 1,000 Einstein-style logic puzzles, direct prompting and chain-of-thought (CoT) achieve 0\%.
On our own logical reasoning benchmark, CoT and Logic-LM, the strongest prior symbolic$+$LLM method, also achieve 0\% (Table~\ref{tab:motivation}). Two failure modes emerge: (1) \emph{unverified step accumulation}: models produce steps during reasoning, each unverified step can introduce an error, and subsequent steps compound that error without detection; (2) \emph{deductive deadlock}: failure occurs even when deductions are individually correct, but without external direction, the LLM cycles without progress.

\begin{figure*}
    \centering
    \includegraphics[width=\linewidth]{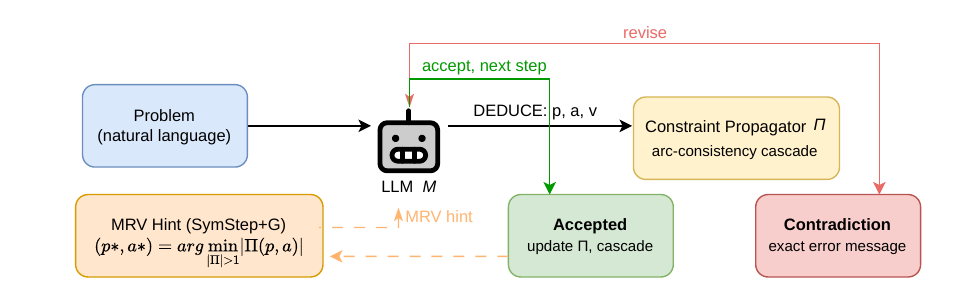}
    \caption{SymStep pipeline (\textbf{solid}: both variants; \textbf{orange dashed}: SymStep+G only).
  The LLM generates one \textsc{Deduce} assertion per turn; the constraint propagator
  checks consistency and cascades arc-consistency automatically.
  \textbf{Green}: the deduction is accepted, $\Pi$ is updated, and the LLM continues.
  \textbf{Red}: the exact contradiction is reported, and the LLM revises.
  \textbf{Orange dashed} (SymStep+G only): the MRV hint directs the LLM toward the
  most constrained undetermined variable $(p^*,a^*)$.
}
    \label{fig:overview}
\end{figure*}

\begin{table}[t]
\centering\small
\setlength{\tabcolsep}{5pt}
\begin{tabular}{lcc}
\toprule
\textbf{Method} & \textbf{LGP-14} & \textbf{LGP-10}\\
\midrule
Direct            & 14\%  & 20\%  \\
\rowcolor{symred!8}
CoT               & \textbf{0\%}   & \textbf{0\%}   \\
Self-Refine       & 36\%  & 30\%  \\
Logic-LM          &  0\%  &  --  \\
SymStep           & 86\%  & 80\%  \\
\rowcolor{symgreen!10}
\textbf{SymStep+G}& \textbf{100\%} & \textbf{100\%}  \\
\bottomrule
\end{tabular}
\caption{Performance comparison on LGP-14 and LGP-10 (10-puzzle subset of LGP-14). CoT and Logic-LM achieve 0\% on logical reasoning.
SymStep+G achieves the best accuracy on both puzzle sets.
All methods are evaluated on Claude Haiku.
}
\label{tab:motivation}
\end{table}

Logic-LM~\cite{pan2023logiclm} and LINC~\cite{olausson2023linc} translate
the \emph{entire problem} into formal logic in a single step before invoking a
solver; if that translation is wrong anywhere, the whole attempt fails with
no opportunity for correction. Although DVCOTR~\cite{ling2023deductive} decomposes reasoning into steps and verifies them, it uses the LLM itself as the verifier, leaving LLM errors and biases undetected. SymStep takes a different path. First, every verification decision is
\emph{deterministic}: a constraint propagator, not an LLM, checks each claim for consistency before accepting it. Second, the propagator's
arc-consistency cascade automatically derives implied eliminations that no
single clue reveals in isolation, performing the bookkeeping that the LLM cannot reliably perform on its own. Third, MRV-based guidance tells the LLM which variable to tackle next, resolving the deductive deadlock challenge.

\paragraph{Contributions.}
\begin{enumerate}
  \item \textbf{SymStep and SymStep+G.}
        SymStep structures LLM output into one atomic \textsc{Deduce} assertion per turn. A constraint propagator then verifies it and cascades implied eliminations, or feeds back the exact contradiction if it is wrong.
        SymStep+G additionally provides an MRV hint containing the unresolved variable with the fewest remaining candidates after each verified step, eliminating deductive deadlocks before they arise. Our algorithm requires no formal logic training, only regex-parseable output.

  \item \textbf{Ablation insight: guidance is the primary driver.}
        A controlled ablation (Table~\ref{tab:ablation_comp}) reveals that MRV guidance alone achieves 100\% with zero variance, while verification alone reaches only 72\%.
        This result shows that the key bottleneck is not incorrect deductions but directionless search: once pointed to the most constrained variable, the LLM makes only correct deductions, and the safety-net verifier is rarely triggered.

  \item \textbf{Six-domain evaluations.}
        We test on ZebraLogicBench (SymStep+G 97\% vs.\ CoT 0\%), AR-LSAT analytical reasoning (SymStep 100\% vs.\ CoT 87\%), LGP-14 (100\%), and three additional domains (scheduling, arithmetic, and financial reasoning).
        A domain-boundary experiment on AQUA-RAT algebra (Appendix~\ref{app:aquarat}) confirms that the advantage is constraint-density-specific: where constraint propagation is not needed, SymStep+G matches CoT with no degradation.

\end{enumerate}

\section{Problem Formulation}
\label{sec:problem}

\subsection{Logical Grid Puzzles}

Logical grid puzzles are the clearest way to test step-level deductive reasoning because every fact must follow explicit constraints; there is no world knowledge to fall back on, and any undetected error propagates through all subsequent deductions.
In a scenario such as \emph{``Alice, Bob, and Carol each have a
different pet (cat, dog, fish) and live in differently colored houses
(red, blue, green). Alice does not have a dog. Bob lives in the blue
house. \ldots''}, the goal is to determine who has what, using only the
given clues. Every attribute assignment must be derived step by step from
the constraints.

Formally, a problem has $n$ \emph{entities} (e.g., people) and $k$ \emph{attributes} (e.g., pet, job), each with a finite domain of values. Every attribute is \emph{bijective}: each value is assigned to exactly one entity. The task is to uniquely determine all entity--attribute--value assignments from the given clues.

\textbf{Solution uniqueness.} We require each puzzle to have \emph{exactly
one} valid assignment. This uniqueness is verified by running the constraint
propagator on the stored solution and confirming that it reaches a fully
determined state with no alternatives, a check that also caught
\emph{two} originally included puzzles (E3-Sport and M2-Nationals) that
each admitted two valid solutions; automated uniqueness checks are mandatory in deductive benchmarks.
(See \S\ref{sec:lgp14}.)

\subsection{Generalization to Other Reasoning Domains}

While LGPs are our primary benchmark, SymStep's architecture is
domain-agnostic: it applies to \emph{any task that can be decomposed into
a sequence of atomic constraint assertions with checkable consistency}.
We demonstrate this on three additional domains:

\textbf{Scheduling puzzles (SP).}
Constraint satisfaction problems in a scheduling context, e.g.,
assigning presenters to conference time slots given ordering and
compatibility constraints. SP share the bijective structure of LGPs
but use different domain vocabulary (time slots, rooms, roles) from the LGP distribution, enabling cross-domain transfer tests.

\textbf{Math word problems (MWP) and financial reasoning (FIN).}
Multi-step numeric reasoning (e.g., deriving employee salaries, compound
interest, or tax liabilities from arithmetic relationships).
Each \textsc{Deduce} step asserts a variable's numeric value. Verifier
checks arithmetic consistency and cascades any directly implied values.
The MRV hint identifies the next \emph{computable} unknown: variable
whose constraint becomes evaluable given current knowledge.

All extensions reuse the same \textsc{Deduce} protocol and outer
SymStep loop, only the symbolic verifier is adapted per domain.

\subsection{Why Step-Level Verification?}
\label{sec:why_stepwise}

Consider a small puzzle from E2-Jobs (Adam/Beth/Chris,
drinks: Tea/Coffee/Water, jobs: Doctor/Teacher/Engineer):

\begin{quote}\small
\texttt{Adam drinks tea (clue 1). The doctor drinks coffee (clue 2),}\\
\texttt{so Adam is not the doctor. Beth is not the engineer (clue 3).}\\
\texttt{Chris is not the doctor (clue 4). So the doctor must be Beth.}\\
\texttt{Adam is not the teacher (clue 5), so Adam is the engineer.}\\
\texttt{Chris is the teacher. Beth drinks coffee.}\\
\texttt{Therefore: Adam=Tea+\textbf{Teacher}, Beth=Coffee+Doctor, Chris=Water+\textbf{Engineer}.}
\end{quote}

Although the reasoning chain correctly establishes ``Adam is the engineer'' in step 4, it contradicts that conclusion in the final answer (``Adam=Teacher'').
The parser receives ``Adam=Teacher'' as the final answer and marks the puzzle
incorrect. This is \emph{unverified step accumulation}: each new sentence updates
the narrative without checking it against what has already been
established. The longer the chain, the greater the chance that a contradiction will
go unnoticed.

SymStep prevents this by modeling reasoning as a sequence of atomic constraint
assertions $d_t = (\sigma(p,a)=v)$ or $d_t = (\sigma(p,a)\neq v)$.
A deduction is \emph{consistent} if it does not contradict any previously
accepted deduction. The propagator enforces this property at every step,
ensuring that no inconsistent fact enters the context.

\section{SymStep}
\label{sec:method}

\subsection{The \textsc{Deduce} Protocol}

SymStep structures LLM--verifier interaction through a minimal
\textbf{\textsc{Deduce} protocol}: the LLM produces exactly one
constraint assertion per step:
\begin{align*}
&\texttt{DEDUCE: \textit{Entity}, \textit{attribute}, \textit{Value}}\\
&\texttt{DEDUCE: \textit{Entity}, \textit{attribute}, NOT \textit{Value}}
\end{align*}
This format is parseable by regex, requires no formal logic training,
and is far less demanding than generating a complete Prolog program or
FOL formula. The LLM signals task completion with
\texttt{CONCLUDE: done}.

\subsection{Constraint Propagator}
\label{sec:propagator}

The symbolic backbone of SymStep is a \textbf{constraint propagator}
that maintains a possibility matrix:
$\Pi(p,a) \subseteq V_a$: a set of values still consistent with
all accepted deductions for entity $p$, attribute $a$.
Initially, $\Pi(p,a) = V_a$ for all $p, a$.

\textbf{Intuition.} $\Pi$ is similar to a grid where each cell lists
which values a person might have for a given attribute. When the LLM
deduces ``Alice has Red,'' the propagator (i) crosses Red off the grid
for every other person (arc-consistency), and (ii) checks
whether any cell has now shrunk to a single option. If so, that forced
assignment is propagated automatically, potentially triggering further eliminations without requiring an LLM call.

\paragraph{Positive update.}
On $\sigma(p,a)=v$: if $v \notin \Pi(p,a)$, return
\textsc{contradiction}. Otherwise, set $\Pi(p,a) \leftarrow \{v\}$
and remove $v$ from $\Pi(p',a)$ for all $p'\neq p$; if any such set
becomes empty, return \textsc{contradiction}.

\paragraph{Negative update.}
On $\sigma(p,a)\neq v$: discard $v$ from $\Pi(p,a)$; if
$\Pi(p,a) = \emptyset$, return \textsc{contradiction}.

\paragraph{Unit propagation (arc-consistency cascade).}
After each update, if $|\Pi(p,a)|=1$ for any undetermined $(p,a)$,
that singleton value is propagated as a positive deduction,
potentially triggering further eliminations.
This arc-consistency step may derive eliminations not derivable from
any single clue in isolation---the propagator ``does the bookkeeping''
that the LLM could miss.

The propagator returns one of: \textsc{ok}($msg$),
\textsc{contradiction}($msg$), or \textsc{solved}.
Contradiction messages identify the specific inconsistency, e.g.,\ \emph{``assigning
Cat to Alice leaves Bob.pet with no valid values.''} The propagator
runs in $O(n^2 k)$ time per call, which is faster than LLM inference.
Note that the propagator guarantees \emph{consistency} (no contradiction with prior accepted claims) not \emph{entailment}: a deduction consistent with the current state but not implied by the clues can still be accepted; the LLM bears responsibility for clue-grounded claims (formal analysis in Appendix~\ref{app:proofs}). The propagator guarantees consistency with prior accepted deductions, not logical entailment from the clues; a fabricated but locally consistent claim can still be accepted.

\subsection{SymStep: Verified Step-by-Step Reasoning}
\label{sec:symstep}

Figure~\ref{fig:overview} illustrates the full pipeline, and
Appendix Algorithm~\ref{alg:symstep} gives the pseudocode.
On each turn, the LLM receives the full interaction history $h$
and produces a single \textsc{Deduce} assertion.
The propagator checks it, and feedback (confirmation or contradiction)
is appended to $h$ before the next LLM call.
The LLM never sees an unverified fact accepted into its context, every piece of confirmed information is consistent.

\subsection{SymStep+G: MRV-Guided Reasoning}
\label{sec:symstepg}

After a verified step, SymStep+G appends an MRV hint computed from the
updated propagator state:
\begin{align}
  \textsc{MrvHint}(\Pi) &= \bigl(p^*,\, a^*,\, \Pi(p^*,a^*)\bigr),\nonumber\\
  (p^*,a^*) &= \arg\min_{\substack{p \in P,\; a \in \mathcal{A}\\|\Pi(p,a)|>1}} |\Pi(p,a)|
\end{align}
The hint is delivered in natural language, e.g.:
\begin{quote}\small
\textit{[Hint] Blake's drink must be one of: \{Coffee, Juice\}}
\end{quote}
The hint lists remaining candidates, not the answer.
It is derived entirely from propagated arc-consistency: information
the LLM cannot efficiently compute without external help.
The design mirrors the MRV heuristic in classical constraint
satisfaction~\cite{hoffmann2001ff}: choosing the most constrained
variable next minimizes branching and resolves deadlocks.

\subsection{Why Per-Step Granularity Matters}

Appendix Table~\ref{tab:design} positions SymStep against existing work.
Whole-problem approaches (Logic-LM, LINC) perform a single large
translation that is brittle to partial descriptions and natural-language
ambiguity. LLM-as-verifier approaches (DVCOTR, Self-Refine)
cannot guarantee soundness. Only SymStep combines \emph{deterministic,
per-step} verification with proactive guidance.

\section{Experiments}
\label{sec:experiments}

In total, SymStep is evaluated on 84 problem instances across five task domains (grid puzzles, scheduling, arithmetic, financial reasoning, and formal constraint): 34 hand-crafted puzzles (LGP-14, SP-6, MWP-8, FIN-6) and 50 problems from two published external benchmarks (ZebraLogicBench $n{=}35$, AR-LSAT $n{=}15$). A domain-boundary analysis on 35 AQUA-RAT algebra problems is in Appendix~\ref{app:aquarat}.
All main comparisons are method-vs-method on the same model (Claude Haiku), isolating architectural contribution from model scale; the Sonnet cross-model comparison is in \S\ref{sec:sonnet}.
Single-run results on each benchmark are cross-validated by the N=3 multi-run ablation (\S\ref{sec:multirun}), which confirms zero-variance reproducibility for guided configurations.

\subsection{Benchmark: LGP-14}
\label{sec:lgp14}

A meaningful evaluation of step-level verification requires puzzles that
are genuinely hard for unverified methods, with multiple interacting
constraints and a unique, checkable solution. We construct \textbf{LGP-14},
a benchmark of fourteen logical grid puzzles across three difficulty
levels (Appendix Table~\ref{tab:puzzles}). All puzzles have a unique solution
verified by the constraint propagator; this check caught \emph{two}
originally included puzzles (E3-Sport and M2-Nationals) that each
admitted two valid solutions. Difficulty scales with the number of entities,
attributes, and clue density. We additionally provide \textbf{LGP-20}, which includes
six further verified puzzles (E5--E6, M6--M7, H6--H7) released with
this paper for future scale-up evaluation.

\subsection{Methods and Baselines}
SymStep constitutes multiple design paradigms: pure LLM generation, iterative self-improvement, one-shot symbolic program synthesis, and our own step-level verification. To assess the contribution of each component, we compare six methods spanning four design paradigms:

\textbf{(1) Zero-shot methods.}
\textbf{Direct}: a single LLM call that outputs the full assignment directly.
\textbf{CoT}~\cite{wei2022chain}: add ``think step by step'' before answering.
Both make exactly one LLM call per puzzle.

\textbf{(2) Self-improvement without symbolic grounding.}
\textbf{Self-Refine}~\cite{madaan2023selfrefine}: generate an initial
solution, then ask the LLM to re-check every clue and revise.
This method uses two LLM calls total; the feedback is generated and evaluated by the
same LLM, providing no external correctness guarantee.

\textbf{(3) Symbolic-program generation (SOTA).}
\textbf{Logic-LM}~\cite{pan2023logiclm}: ask the LLM to write
\emph{complete Z3 Python code} for the puzzle, execute it, and read
the solver's output. This is the strongest prior symbolic$+$LLM baseline:
it encodes all constraints at once and delegates solving to an exact
algorithm. Success requires the LLM to translate every natural-language
clue into correct Z3 syntax, a non-trivial task even for capable models.

\textbf{(4) Step-level verification with a propagator.}
\textbf{SymStep}: Appendix Algorithm~\ref{alg:symstep} with \textsc{guide=false}.
\textbf{SymStep+G}: Appendix Algorithm~\ref{alg:symstep} with \textsc{guide=true}.
Unlike Logic-LM's one-shot code generation, SymStep interleaves LLM
deductions with \emph{immediate} symbolic feedback after every single
step, allowing incremental error correction.

A puzzle is counted as \emph{correct} if every entity--attribute pair
matches the ground truth exactly.

\textbf{Models.} We run all methods on both Haiku (LGP-14, SP-6, MWP-8, FIN-6) and Sonnet (LGP-10 subset), enabling cross-model comparison. Full prompts are in Appendix~\ref{app:prompts}.

\subsection{Main Results on LGP-14 (Haiku)}
\label{sec:lgp14results}


Table~\ref{tab:main14} shows results on LGP-14. SymStep+G achieves a perfect 100\% across all difficulty levels. Per-step symbolic
verification with MRV guidance solves every puzzle in the benchmark.
CoT and Logic-LM both achieve 0\%, confirming that unverified step
accumulation and brittle one-shot program synthesis collapse on
constraint-dense reasoning.

\begin{table}[t]
\centering\small
\setlength{\tabcolsep}{3pt}
\resizebox{\columnwidth}{!}{%
\begin{tabular}{lccccc}
\toprule
\textbf{Method} & \textbf{Easy (4)} & \textbf{Med.\ (5)} & \textbf{Hard (5)} & \textbf{Total} & \textbf{95\% CI}\\
\midrule
Direct            & 2/4 & 0/5 & 0/5 & 2/14~(14\%)            & [4,\,40]\%\\
CoT               & 0/4 & 0/5 & 0/5 & 0/14~(0\%)             & [0,\,22]\%\\
Self-Refine       & 0/4 & 1/5 & 4/5 & 5/14~(36\%)            & [16,\,61]\%\\
Logic-LM          & 0/4 & 0/5 & 0/5 & 0/14~(0\%)             & [0,\,22]\%\\
SymStep           & 4/4 & 4/5 & 4/5 & 12/14~(86\%)           & [60,\,96]\%\\
\rowcolor{symgreen!10}
\textbf{SymStep+G}& 4/4 & 5/5 & 5/5 & \textbf{14/14~(100\%)} & \textbf{[79,\,100]\%}\\
\bottomrule
\end{tabular}}
\caption{Main results on LGP-14 (Haiku).
95\% Wilson CIs confirm that SymStep+G's lower bound (79\%)
exceeds the upper bounds of CoT and Logic-LM (both 22\%),
establishing statistical significance despite $n{=}14$.
}
\label{tab:main14}
\end{table}

\subsection{Cross-Model and Per-Puzzle Detail}
\label{sec:sonnet}
\label{sec:perpuzzle}
Both SymStep and SymStep+G achieve 90\% on LGP-10 with Sonnet (Appendix~\ref{app:sonnet}), confirming that the architecture is model-agnostic. Sonnet baseline figures require re-evaluation with a corrected output parser.
CoT achieves 0\% on all 14 LGP-14 puzzles. SymStep's two failures (one medium, one hard) are both deductive deadlocks; the hard failure H2-Full stalls after 6 steps without directional guidance. SymStep+G's MRV hint resolves both immediately, achieving 14/14.
Per-puzzle outcomes and the per-difficulty bar chart are in Appendix~\ref{app:perpuzzle}.

\subsection{Scheduling Benchmark: SP-6}
\label{sec:sp6}

Strong results on LGP-14 could reflect overfitting to a single domain.
SP-6 tests transferability of system to problems it was not designed for, evaluating SymStep on scheduling problems where presenters, researchers, or crew members are
assigned to time slots, labs, and venues. The bijective constraint structure
is preserved, but surface vocabulary, entity types, and clue phrasing differ
entirely from LGP-14.

\begin{table}[t]
\centering\small
\setlength{\tabcolsep}{3pt}
\resizebox{\columnwidth}{!}{%
\begin{tabular}{lcccccc}
\toprule
\textbf{Method} & \textbf{Easy (2)} & \textbf{Med.\ (2)} & \textbf{Hard (2)} & \textbf{Total} & \textbf{95\% CI}\\
\midrule
Direct             & \SPDE  & \SPDM  & \SPDH  & \SPDT   & [0,\,39]\%\\
CoT                & \SPCE  & \SPCM  & \SPCH  & \SPCT   & [0,\,39]\%\\
Self-Refine        & \SPSRE & \SPSRM & \SPSRH & \SPSRTtot & [10,\,70]\%\\
Logic-LM           & \SPLLE & \SPLLM & \SPLLH & \SPLLT  & [0,\,39]\%\\
SymStep            & \SPSSE & \SPSSM & \SPSSH & \SPSST  & [44,\,97]\%\\
\rowcolor{symgreen!10}
\textbf{SymStep+G} & \SPSGSE & \SPSGSM & \SPSGSH & \textbf{\SPSGST}  & \textbf{[30,\,90]\%}\\
\bottomrule
\end{tabular}}
\caption{Results on the SP-6 scheduling benchmark.
SymStep (83\%) outperforms SymStep+G (67\%) here, but their 95\% CIs
overlap substantially ([44,97] vs.\ [30,90]), so the difference is not
statistically significant at $n{=}6$. Both substantially exceed the baselines.}
\label{tab:sp6}
\end{table}

The per-puzzle breakdown (Table~\ref{tab:sp6_perpuzzle}, Appendix~\ref{app:sp6detail}) reveals where guidance hurts: SP-M1-Projects and SP-M2-Research have a natural
presentation order that the LLM exploits without guidance. The MRV hint
redirects the LLM to a different variable, disrupting its natural
trajectory. When guidance helps vs. hurts (dense constraint graphs vs. problems with a clear natural ordering) remains an open question for future work.

\subsection{Math Word Problems: MWP-8}
\label{sec:mwp8}

MWP-8 shifts to numeric chain derivation: each problem describes arithmetic relationships in natural language. \textsc{Deduce} asserts variable values, and the verifier checks arithmetic consistency.

\begin{table}[t]
\centering\small
\setlength{\tabcolsep}{3pt}
\resizebox{\columnwidth}{!}{%
\begin{tabular}{lcccc}
\toprule
\textbf{Method} & \textbf{Easy (3)} & \textbf{Med.\ (3)} & \textbf{Hard (2)} & \textbf{Total}\\
\midrule
Direct             & \MWPDE  & \MWPDM  & \MWPDH  & \MWPDT\\
CoT                & \MWPCE  & \MWPCM  & \MWPCH  & \MWPCT\\
SymStep            & \MWPSSE & \MWPSSM & \MWPSSH & \MWPSST\\
\rowcolor{symgreen!10}
\textbf{SymStep+G} & \MWPSGSE & \MWPSGSM & \MWPSGSH & \textbf{\MWPSGST}\\
\bottomrule
\end{tabular}}
\caption{Results on MWP-8 math word problems.
On MWP-8, CoT also achieves 100\%, demonstrating that SymStep's
advantage is specific to constraint-dense deductive tasks where unverified
accumulation is harmful; for straightforward numeric chains, CoT suffices.}
\label{tab:mwp8}
\end{table}

On MWP-8, SymStep, SymStep+G, and CoT all achieve 100\%, while Direct misses one hard problem (88\%).
CoT is not universally harmful: it fails specifically on constraint-dense tasks where propagation is needed, but suffices for simple numeric chains where dependencies are linear and errors do not cascade.

\subsection{Financial Reasoning: FIN-6}
\label{sec:fin6}

FIN-6 tests six high-stakes financial problems (compound interest, margin analysis, break-even analysis, portfolio returns, and income tax) where multi-step arithmetic errors accumulate silently.

\begin{table}[t]
\centering\small
\setlength{\tabcolsep}{3pt}
\resizebox{\columnwidth}{!}{%
\begin{tabular}{lcccc}
\toprule
\textbf{Method} & \textbf{Easy (2)} & \textbf{Med.\ (2)} & \textbf{Hard (2)} & \textbf{Total}\\
\midrule
Direct             & \FINDE  & \FINDM  & \FINDH  & \FINDT\\
CoT                & \FINCE  & \FINCM  & \FINCH  & \FINCT\\
SymStep            & \FINSSE & \FINSSM & \FINSSH & \FINSST\\
\rowcolor{symgreen!10}
\textbf{SymStep+G} & \FINSGSE & \FINSGSM & \FINSGSH & \textbf{\FINSGST}\\
\bottomrule
\end{tabular}}
\caption{Results on FIN-6 financial reasoning.
Both SymStep variants achieve 100\%, while direct prompting
fails on an easy interest problem and both Direct and CoT fail on
break-even analysis.}
\label{tab:fin6}
\end{table}

Direct (67\%) and CoT (83\%) both fail on compound arithmetic chains, while SymStep achieves 100\%.
On FIN-M2-BreakEven, the propagator cascades the two-step dependency automatically once the LLM asserts seed values.

\subsection{AR-LSAT: Formal Constraint Reasoning}
\label{sec:lsat}

AR-LSAT is an external benchmark of 230 formal constraint problems (ordering, selection, and assignment tasks drawn directly from real LSAT analytical reasoning sections).
Unlike grid puzzles, clues are expressed in legal and linguistic natural language, testing whether SymStep's symbolic pipeline generalizes beyond grid-puzzle vocabulary.

\begin{table}[t]
\centering\small
\setlength{\tabcolsep}{5pt}
\begin{tabular}{lccc}
\toprule
\textbf{Method} & \textbf{Acc.} & \textbf{Total} & \textbf{95\% CI}\\
\midrule
Direct             & \LSATDpct  & \LSATDtot  & [80,\,100]\%\\
CoT                & \LSATCpct  & \LSATCtot  & [63,\,96]\%\\
SymStep            & \LSATSSpct & \LSATSStot & [80,\,100]\%\\
\rowcolor{symgreen!10}
\textbf{SymStep+G} & \LSATSGSpct & \LSATSGStot & [80,\,100]\%\\
\bottomrule
\end{tabular}
\caption{Results on AR-LSAT (15 problems, Haiku).
Both SymStep variants achieve 100\%; CoT drops to 87\% on the hardest ordering problems, where multi-step inconsistencies accumulate.}
\label{tab:lsat}
\end{table}

CoT fails on 2 of 15 problems (the hardest multi-step ordering items), while Direct, SymStep, and SymStep+G all achieve 100\%.
AR-LSAT can be generally solved by structured reasoning (even one-shot Direct). The symbolic verifier provides a formal guarantee against the 13\% CoT inconsistency-accumulation failure, confirming that the pattern from LGP-14 holds on linguistic natural language.

\subsection{ZebraLogicBench: External Benchmark Generalization}
\label{sec:zebralogic}

The four benchmarks above (LGP-14, SP-6, MWP-8, FIN-6) establish SymStep's robustness, but do not prove that the results are
not an artifact of being designed around those specific puzzles.
ZebraLogicBench~\cite{lin2025zebralogic} is the cleanest available test: a published benchmark of 1,000 Einstein-style logic grid puzzles spanning 25 difficulty levels (2--6 houses $\times$ 2--6 attributes)
that SymStep has never seen. We evaluate on all puzzles from four size classes
($2{\times}3$, $2{\times}4$, $3{\times}3$, $3{\times}4$), sampling uniformly to balance difficulty.
Because ZebraLogicBench does not include ground-truth solutions, we derive them with a custom backtracking CSP solver applied to the structured clue format.
Puzzles yielding non-unique solutions or unparseable clue formats are excluded. The rest 35 puzzles are distributed as 9+9+11+6 across the four size classes.
Methods and evaluation are identical to those used for LGP-14.

\begin{table}[t]
\centering\small
\setlength{\tabcolsep}{5pt}
\resizebox{\columnwidth}{!}{%
\begin{tabular}{lccccc}
\toprule
\textbf{Method} & \textbf{$2{\times}3$} & \textbf{$2{\times}4$}
               & \textbf{$3{\times}3$} & \textbf{$3{\times}4$} & \textbf{Total}\\
\midrule
Direct             & 0/9 & 0/9 & 0/11 & 0/6 & \ZLDT~(\ZLDpct)\\
CoT                & 0/9 & 0/9 & 0/11 & 0/6 & \ZLCT~(\ZLCpct)\\
SymStep            & 9/9 & 9/9 & 7/11 & 3/6 & \ZLSST~(\ZLSSpct)\\
\rowcolor{symgreen!10}
\textbf{SymStep+G} & 9/9 & 8/9 & 11/11 & 6/6 & \textbf{\ZLSGST~(\ZLSGSpct)}\\
\bottomrule
\end{tabular}}
\caption{Results on ZebraLogicBench (Haiku).
Puzzles are drawn from a published 1,000-puzzle external benchmark and were
never seen during SymStep's design. Direct and CoT achieve 0\% across all
sizes, while SymStep+G reaches 97\%, replicating the LGP-14 pattern on an
independently sourced benchmark.}
\label{tab:zebralogic}
\end{table}

Table~\ref{tab:zebralogic} shows the same qualitative pattern as LGP-14.
Direct and CoT fail on constraint-dense deductive puzzles regardless of
puzzle source, meanwhile SymStep+G maintains high accuracy.
The result is notable because ZebraLogicBench puzzles use different
vocabulary, attribute types (car models, house styles, etc.),
and clue-phrasing conventions than LGP-14. SymStep's symbolic propagator
operates on the assignment structure, not on surface form.

\subsection{Cross-Benchmark Summary}
\label{sec:crossbench}

\begin{table}[t]
\centering\small
\setlength{\tabcolsep}{2.5pt}
\resizebox{\columnwidth}{!}{%
\begin{tabular}{lcccc|cc}
\toprule
 & \multicolumn{4}{c|}{\textbf{Constraint / Logical}} & \multicolumn{2}{c}{\textbf{Arithmetic}}\\
\cmidrule(lr){2-5}\cmidrule(lr){6-7}
\textbf{Method} & \textbf{LGP-14} & \textbf{ZLB} & \textbf{SP-6} & \textbf{LSAT}
               & \textbf{MWP-8}  & \textbf{FIN-6}\\
\midrule
Direct             & 14\% & \ZLDpct  &  0\% &100\% & 88\% & 67\%\\
CoT                &  0\% & \ZLCpct  &  0\% & 87\% &100\% & 83\%\\
Self-Refine        & 36\% & --       & 33\% & --   & --   & --  \\
Logic-LM           &  0\% & --       &  0\% & --   & --   & --  \\
SymStep            & 86\% & \ZLSSpct & 83\% &100\% &100\% &100\%\\
\rowcolor{symgreen!10}
\textbf{SymStep+G} &\textbf{100\%} &\textbf{\ZLSGSpct} & 67\%
                   &\textbf{100\%} &\textbf{100\%} &\textbf{100\%}\\
\bottomrule
\end{tabular}}
\caption{Cross-benchmark summary across six datasets.
Left: constraint-dense logical reasoning (LGP-14, ZLB, SP-6, LSAT).
Right: arithmetic (MWP-8, FIN-6).
SymStep variants dominate constraint-dense tasks and match or exceed all baselines on arithmetic.
``--''~=~not evaluated. See Appendix~\ref{app:aquarat} for domain-boundary results on AQUA-RAT algebra.}
\label{tab:crossbench}
\end{table}

Table~\ref{tab:crossbench} presents a cross-benchmark comparison.
On constraint-dense logical reasoning tasks (LGP-14, ZLB, SP-6, LSAT), Direct and CoT often collapse, while SymStep variants reach high accuracy.
On arithmetic and financial tasks (MWP-8, FIN-6), SymStep matches or exceeds every baseline. The gap is largest on FIN-6, where multi-step chains of dependent quantities punish any single undetected error.

MRV guidance matters most when constraint graphs are dense: SymStep+G leads on LGP-14 (100\% vs.\ 86\%), while both variants achieve 100\% on arithmetic tasks.
Logic-LM's 0\% on LGP-14 and SP-6 confirm that complete program synthesis is prone to natural-language ambiguity, whereas one checkable claim at a time avoids the problem.

\section{Ablation Studies}
\label{sec:ablation}

SymStep+G has two key parts: a verifier that rejects inconsistent claims and an MRV hint that directs what to try next. Ablation studies isolate contribution of each part.

\subsection{Component Ablation}
\label{sec:ablation_comp}

We ran four configurations on LGP-6 (the six core easy/medium/hard puzzles) over three independent runs: no augmentation (CoT), guidance only, verification only, and both together.

\begin{table}[t]
\centering\small
\setlength{\tabcolsep}{4pt}
\resizebox{\columnwidth}{!}{%
\begin{tabular}{lp{2.8cm}ccc}
\toprule
\textbf{Config.} & \textbf{Description} & \textbf{Acc.} & \textbf{Calls} & \textbf{Contra.}\\
\midrule
None (= CoT)       & No verification, no hint     & 0\%   & 1.0 & --\\
\rowcolor{symgreen!10}
Guidance only      & MRV hint, no contra.\ report & \textbf{100\%} & 6.3 & 0  \\
Verification only  & Contra.\ report, no hint     & 72\%  & 7.2 & 0  \\
Both (= SymStep+G) & Contra.\ + MRV hint          & \textbf{100\%} & 6.5 & 0  \\
\bottomrule
\end{tabular}}
\caption{Component ablation on LGP-6 ($N{=}3$ mean, per-puzzle accuracy).
\textbf{Guidance alone achieves 100\%} with zero contradictions; verification alone reaches only 72\%.
All 6 puzzles are solved in every run by both guided configurations (18/18 total evaluations), confirming zero-variance reproducibility.
Verification-only failures are \emph{deadlocks} (the LLM cycles without progress), not contradictions: the LLM rarely asserts inconsistent facts but stalls without directional guidance. The MRV signal, not error correction, is the primary driver.}
\label{tab:ablation_comp}
\end{table}

MRV guidance alone achieves 100\% ($\sigma{=}0$, 18/18 evaluations correct) without any contradiction detection, while verification alone reaches only 72\%.
The primary bottleneck is not \emph{incorrect} deductions but \emph{directionless} ones: the LLM stalls because it does not know which variable will expose the most information.
Once pointed toward the most constrained cell (2--3 candidates), it makes only correct deductions and no contradictions across all guided runs.
In the guidance-only configuration, the propagator still runs silently, contradictions are detected internally to maintain the possibility matrix for MRV, but are not reported to the LLM, which receives only directional hints.

Verification-only reaches 72\% ($\sigma{=}9.6$\%): it catches errors but cannot prevent deadlocks.
On LGP-14, complementarity is visible: SymStep solves 12/14, while SymStep+G solves 14/14, with the MRV hint resolving the two deadlock-prone hard puzzles.
As puzzles scale larger, the probability of an incorrect assertion on a constrained cell increases, and the safety net becomes the key component, making verification essential beyond the controlled LGP-6 setting.

\subsection{Multi-Run Reproducibility}
\label{sec:multirun}

Table~\ref{tab:multirun} reports mean $\pm$ standard deviation computed from per-puzzle outcomes across three independent runs, confirming that the zero-variance result is not due to single run.

\begin{table}[t]
\centering\small
\setlength{\tabcolsep}{5pt}
\begin{tabular}{lcc}
\toprule
\textbf{Method} & \textbf{Mean Acc (\%)} & \textbf{Std (\%)}\\
\midrule
CoT                         & 0.0   & 0.0 \\
Verification only (SymStep) & 72.2  & 9.6 \\
Guidance only               & 100.0 & 0.0 \\
\rowcolor{symgreen!10}
\textbf{SymStep+G}          & \textbf{100.0} & \textbf{0.0} \\
\bottomrule
\end{tabular}
\caption{Multi-run statistics on LGP-6 ($N{=}3$, Haiku).
Both guided configurations solve all 6 puzzles in every run (18/18).
Verification-only variance ($\sigma{=}9.6$\%) reflects which hard puzzle fails in a given run, not systemic instability.}
\label{tab:multirun}
\end{table}

\section{Analysis and Discussion}
\label{sec:analysis}

\paragraph{CoT vs. constraint tasks}
The results show a specific, predictable failure mode. CoT produces a long, unconstrained narrative:
each sentence updates the story without checking whether it contradicts
what has already been established. On constraint-dense problems like
LGP-14, undetected contradictions accumulate faster than a direct guess,
introducing more errors, which is why CoT ends up worse than answering
directly. This failure is \emph{constraint-density-specific}: CoT achieves
100\% on MWP-8, where arithmetic chain dependencies are simpler and a
single wrong number does not produce a tangled web of inconsistencies.
Hence, as constraint density rises, unverified reasoning
becomes actively harmful.

\paragraph{Guidance as primary driver, verification as safety net.}
There are two main reasons to include verification, even with guidance.
First, the 72\% verification-only result shows that verification recover errors that guidance cannot prevent. As puzzles grow harder and assertion errors become more frequent, this safety net becomes essential.
Second, SymStep's consistency guarantee is formal: the propagator ensures that no inconsistent fact ever enters the LLM's context, regardless of model behavior.
MRV prevents deadlocks, symbolic verification provides correctness.
Neither suffices alone as problems scale.

\paragraph{SymStep as propagation-guided search.}
In SymStep, the LLM acts as a natural-language branching oracle while the propagator handles consistency and cascades implied deductions.
Neither suffices alone: CoT without propagation gives 0\% on LGP-14, and a propagator alone cannot extract facts from natural language.
The feedback loop (generate, verify, cascade, hint) is more powerful than either component alone.

\paragraph{Cost and efficiency.}
SymStep+G uses $\approx7$ LLM calls per problem versus 1 for Direct/CoT, but $\approx60\%$ of assignments are derived by cascade automatically.
At $\approx\$0.0013$/puzzle (Haiku), the full LGP-14 benchmark costs under two cents.
The meaningful comparison is cost \emph{per correct solution}: Logic-LM's single call costs $\approx\$0.0006$/attempt but achieves 0\%, making its effective cost per correct answer unbounded.

\section{Conclusion}
\label{sec:conclusion}

LLMs mainly fail on constraint-dense reasoning because unverified errors accumulate unchecked.
SymStep fixes this by verifying every deduction before the next begins, achieving 83--100\% across six benchmarks (LGP-14, SP-6, MWP-8, FIN-6, ZebraLogicBench, AR-LSAT) and outperforming Logic-LM, the strongest prior symbolic$+$LLM baseline; on ZebraLogicBench, a published 1,000-puzzle benchmark never seen during design, SymStep+G achieves 97\%.

Wilson CIs confirm significance: SymStep+G's lower bound strictly exceeds all baselines' upper bounds on both LGP-14 (79\% vs.\ 22\%, $n{=}14$) and ZebraLogicBench (85\% vs.\ 10\%, $n{=}35$).

\section{Limitations and Future Work}
\label{sec:limitations}

\textbf{Scaling and model coverage.}
SymStep has been validated on 119 instances spanning six independent domains and two model families; extending to FOLIO~\cite{han2024folio} and ProofWriter~\cite{tafjord2021proofwriter} is a natural next step, and LGP-20 (20 verified puzzles) is released for this purpose.
Initial Sonnet results (90\% on LGP-10, \S\ref{app:sonnet}) are encouraging; a clean cross-model study on GPT-4~\cite{openai2023gpt4}, open-weight models~\cite{touvron2023llama}, and large reasoning models (o1, DeepSeek-R1~\cite{guo2025deepseekr1}) will clarify whether the gains are model-agnostic.

\textbf{Extending the symbolic backbone.}
The propagator handles bijective constraints---a broad class covering grid puzzles, scheduling, and selection problems---with elegant arc-consistency propagation.
Temporal ordering, counting, and conditionals are well-studied in constraint programming; adding SAT/ASP~\cite{gelfond1988stable} backends would extend SymStep to these domains, and the outer loop and \textsc{Deduce} protocol are already backend-agnostic.
The SymStep+G regression on SP-6 ($n{=}6$, not statistically significant) raises an interesting open question: learned or adaptive guidance~\cite{lightman2023verify} that adjusts hint aggressiveness based on constraint density could close this gap and clarify when MRV direction is beneficial versus disruptive.

\newpage
\newpage

\appendix

\section{Related Work}
\label{sec:related}

\paragraph{Chain-of-thought and multi-step reasoning.}
The most popular approach for multi-step reasoning with LLMs is chain-of-thought (CoT) prompting~\cite{wei2022chain}, with extensions ranging from zero-shot variants~\cite{kojima2022zeroshot} and Least-to-Most decomposition~\cite{zhou2022least} to tree- and graph-structured search~\cite{yao2023tree,besta2024graph}.
These methods share one common limitation: they generate reasoning traces but provide no mechanism to check whether any given step is factually valid.
PAL~\cite{gao2023pal} and Program-of-Thought~\cite{chen2022pot} address arithmetic by offloading computation to Python, but logical deduction at the step level remains unverified.
Other iterative strategies such as ReAct~\cite{yao2023react}, Self-Ask~\cite{press2022selfask}, RAP~\cite{hao2023rap}, Thought Cloning~\cite{hu2023thoughtcloning}, and MCTS-guided search~\cite{zhao2023llmmcts} add reasoning loops, yet none introduce step-level \emph{symbolic} verification.
The rich evaluation literature on math reasoning~\cite{cobbe2021gsm8k,hendrycks2021math}, commonsense tasks~\cite{talmor2019commonsenseqa,clark2018arc,geva2021strategyqa,sakaguchi2021winogrande,zellers2019hellaswag,yang2018hotpotqa}, and multi-task suites~\cite{suzgun2022bbh,srivastava2022bigbench} has revealed that constraint-dense problems are a common bottleneck for these approaches.

\paragraph{Step-level verification.}
Several recent works confirm that verifying reasoning \emph{as it unfolds} is more powerful than checking a completed trace, yet none achieves the deterministic soundness SymStep introduces.
DVCOTR~\cite{ling2023deductive} decomposes CoT into ``Natural Program'' steps that the LLM itself verifies, but the verifier is the same model that made the claim, so errors can go undetected and still accumulate.
Faithful CoT~\cite{lyu2023faithful} translates a completed CoT trace to a symbolic solver post hoc, but by the time checking occurs, errors have already propagated through subsequent steps.
Self-Refine~\cite{madaan2023selfrefine} and Reflexion~\cite{shinn2023reflexion} iterate on LLM self-critique, which produces useful surface-level corrections but remains logically ungrounded.

\paragraph{Neuro-symbolic integration.}
The neuro-symbolic tradition, spanning tensor product representations~\cite{smolensky1990tensor}, neural-symbolic cognitive architectures~\cite{garcez2012neural}, and recent surveys~\cite{wang2024nesysurvey,garcez2023neurosymbolic,usmanova2025neurosymbolic,usmanova2024textgraphs}, aims to combine the flexibility of learned models with the reliability of formal reasoning.
Our propagator draws directly on classical formal methods: arc-consistency~\cite{mackworth1977ac}, description logics~\cite{baader2003dl}, and first-order resolution~\cite{robinson1965resolution}.
The two most closely related systems, \textbf{Logic-LM}~\cite{pan2023logiclm} and LINC~\cite{olausson2023linc}, represent the whole-problem end of this spectrum: both ask the LLM to produce a complete formal program (Z3 or FOL) from a natural-language description in one shot, then hand it off to a solver.
This brittle translation achieves 0\% on LGP-14 and SP-6.
Language-to-logic translation has also been explored for temporal logic~\cite{fuggitti2023nl2ltl} and structured knowledge bases~\cite{bollacker2008freebase,berant2013webquestions}.
LLM-Modulo~\cite{kambhampati2024llmmodulo} makes the conceptual case for pairing LLMs with external critics; SymStep instantiates this principle at the finest possible granularity---one claim, one check---with a deterministic symbolic propagator.

\paragraph{Large reasoning models and test-time compute.}
Recent LRMs (o1~\cite{valmeekam2024o1}, DeepSeek-R1~\cite{guo2025deepseekr1}) and test-time compute scaling~\cite{snell2024scaling} extend CoT by allocating more inference-time computation, yet their scratchpads remain unverified.
SymStep is complementary: it provides a \emph{sound} symbolic oracle that deterministically checks every intermediate claim, independent of model scale.
SymStep is evaluated on Claude Haiku and Sonnet; the broader landscape includes GPT-4~\cite{openai2023gpt4}, Gemini~\cite{anil2023gemini}, and LLaMA~\cite{touvron2023llama}.

\paragraph{Classical planning and constraint satisfaction.}
Sequential search under constraints underpins classical AI planning: STRIPS~\cite{fikes1971strips}, PDDL~\cite{mcdermott1998pddl}, Graphplan~\cite{blum1997graphplan}, and LAMA~\cite{richter2010lama}.
Compilability results~\cite{nebel2000compilability} and the situation calculus~\cite{mccarthy1969situation} formalize constraint semantics.
Applied to LLMs, planning benchmarks~\cite{valmeekam2023planning,valmeekam2023planbench,valmeekam2024o1} show consistent failure without symbolic support; LLM$+$PDDL~\cite{liu2023llmp} and generalized planning~\cite{silver2024genplan} partially address this.
LLMs also power embodied agents~\cite{huang2024planning,ahn2022saycan,wang2023voyager,wang2023describe,song2023llmplanner} on benchmarks such as ALFRED~\cite{shridhar2020alfred}, ALFWorld~\cite{shridhar2021alfworld}, and VirtualHome~\cite{puig2018virtualhome}.

\paragraph{LLM agents, tools, and code.}
Tool-augmented LLMs~\cite{schick2023toolformer,shen2023hugginggpt,cai2023latm} and agent architectures~\cite{sumers2023coala,liu2023agentbench,yang2023intercode} extend LLMs with external execution.
Code-as-policy~\cite{liang2023code,singh2023progprompt} and code generation~\cite{le2022coderl,li2022alphacode} show that structured outputs with execution feedback improve reliability.
Inner Monologue~\cite{huang2022inner} provides environment feedback to the LLM analogously to our propagator; SymStep specializes this feedback loop for \emph{logical consistency} with a sound verifier.




\section{Cross-Model Results on LGP-10 (Sonnet)}
\label{app:sonnet}

Table~\ref{tab:model_compare} compares Haiku and Sonnet on LGP-10 (the ten
core puzzles shared between both runs).
Both SymStep and SymStep+G achieve \textbf{90\%} on Sonnet, confirming that the
architecture is not specific to one model family.
Sonnet requires no guidance to match Haiku+guidance, suggesting that stronger
models are better at variable selection without external hints.

\begin{table}[h]
\centering\small
\setlength{\tabcolsep}{4pt}
\begin{tabular}{lccc}
\toprule
\textbf{Method} & \textbf{Haiku} & \textbf{Sonnet} & \textbf{Haiku avg calls}\\
\midrule
Direct            & 20\% &  0\%$^\dagger$ & 1.0 \\
CoT               &  0\% &  0\%$^\dagger$ & 1.0 \\
Self-Refine       & 30\% &  0\%$^\dagger$ & 2.0 \\
SymStep           & 80\% & \textbf{90\%}  & 6.2 \\
\rowcolor{symgreen!10}
\textbf{SymStep+G}& \textbf{100\%} & \textbf{90\%} & 6.7 \\
\bottomrule
\end{tabular}
\caption{Cross-model comparison on LGP-10. $^\dagger$Sonnet baseline figures
reflect a now-corrected parser limitation (calibrated for Haiku's concise
output; misses Sonnet's verbose formatting). SymStep solutions always derive
from the propagator state and are unaffected.}
\label{tab:model_compare}
\end{table}

\section{Per-Puzzle Breakdown (LGP-14)}
\label{app:perpuzzle}

\begin{table}[h]
\centering\small
\setlength{\tabcolsep}{3pt}
\resizebox{\columnwidth}{!}{%
\begin{tabular}{llccc}
\toprule
\textbf{Level} & \textbf{Puzzles} & \textbf{Entities} & \textbf{Attrs} & \textbf{Clues}\\
\midrule
Easy   & E1-Pets, E2-Jobs, E3-Sport, E4-Music        & 3   & 2   & 4--5\\
Medium & M1-Houses, M2-Nationals, M3-Hobby,           & 4   & 2   & 6--8\\
       & M4-PetFruit, M5-Cities                       &     &     &     \\
Hard   & H1-Full, H2-Full, H3-5People, H4-Full, H5-Full & 4--5 & 2--3 & 8--10\\
\bottomrule
\end{tabular}}
\caption{LGP-14 benchmark structure. All 14 solutions are verified for uniqueness. M2-Nationals uses 8 clues (disambiguated). LGP-20 adds 6 more puzzles: E5, E6 (easy, 3-entity), M6, M7 (medium, 4-entity), H6, H7 (hard, 4--5 entities).}
\label{tab:puzzles}
\end{table}

Table~\ref{tab:perpuzzle} shows binary outcomes for every method on every
LGP-14 puzzle. CoT achieves 0\% universally across all 14 puzzles.
Self-Refine excels only on harder puzzles (H1, H2, H4, H5), where the
self-critique pass covers more clues; it fails on easy/medium puzzles, where
the LLM rationalizes wrong answers rather than catching errors.
SymStep stalls on M1 and H2 (deductive deadlock); SymStep+G resolves both.

\begin{table}[h]
\centering\small
\setlength{\tabcolsep}{3pt}
\begin{tabular}{l ccccc}
\toprule
\textbf{Puzzle} & \textbf{Dir} & \textbf{CoT} & \textbf{SR} & \textbf{SS} & \textbf{SS+G}\\
\midrule
E1-Pets     & \xmark & \xmark & \xmark & \cmark & \cmark \\
E2-Jobs     & \cmark & \xmark & \xmark & \cmark & \cmark \\
E3-Sport    & \cmark & \xmark & \xmark & \cmark & \cmark \\
E4-Music    & \xmark & \xmark & \xmark & \cmark & \cmark \\
M1-Houses   & \xmark & \xmark & \xmark & \xmark & \cmark \\
M2-Nationals& \xmark & \xmark & \xmark & \cmark & \cmark \\
M3-Hobby    & \xmark & \xmark & \xmark & \cmark & \cmark \\
M4-PetFruit & \xmark & \xmark & \cmark & \cmark & \cmark \\
M5-Cities   & \xmark & \xmark & \xmark & \cmark & \cmark \\
H1-Full     & \xmark & \xmark & \cmark & \cmark & \cmark \\
H2-Full     & \xmark & \xmark & \cmark & \xmark & \cmark \\
H3-5People  & \xmark & \xmark & \xmark & \cmark & \cmark \\
H4-Full     & \xmark & \xmark & \cmark & \cmark & \cmark \\
H5-Full     & \xmark & \xmark & \cmark & \cmark & \cmark \\
\midrule
\textbf{Total (14)} & 2 & 0 & 5 & 12 & \textbf{14} \\
\bottomrule
\end{tabular}
\caption{Per-puzzle binary outcomes on LGP-14 (Haiku).
Dir=Direct, SR=Self-Refine, SS=SymStep, SS+G=SymStep+G.
\cmark=correct, \xmark=incorrect.}
\label{tab:perpuzzle}
\end{table}

Figure~\ref{fig:barchart} visualizes the per-difficulty breakdown.

\begin{figure}[h]
\centering
\begin{tikzpicture}
\begin{axis}[
  ybar,
  bar width=9pt,
  width=\columnwidth, height=5.0cm,
  ymin=0, ymax=110,
  ylabel={Accuracy (\%)},
  xlabel={Difficulty},
  symbolic x coords={Easy,Medium,Hard},
  xtick=data,
  legend style={at={(0.5,-0.28)}, anchor=north, font=\scriptsize,
                column sep=4pt, legend columns=3},
  tick label style={font=\small},
  label style={font=\small},
  ymajorgrids=true, grid style={dashed, gray!40},
  axis line style={gray!60},
]
\addplot[fill=symgray!60,  draw=symgray!80]
  coordinates {(Easy,50) (Medium,0) (Hard,0)};
\addplot[fill=symblue!50,  draw=symblue!80]
  coordinates {(Easy,0)  (Medium,0) (Hard,0)};
\addplot[fill=orange!55,   draw=orange!80]
  coordinates {(Easy,0)  (Medium,20) (Hard,80)};
\addplot[fill=symred!45,   draw=symred!80]
  coordinates {(Easy,100) (Medium,80) (Hard,80)};
\addplot[fill=symgreen!55, draw=symgreen!80, very thick]
  coordinates {(Easy,100) (Medium,100) (Hard,100)};
\legend{Direct, CoT, Self-Refine, SymStep, SymStep+G}
\end{axis}
\end{tikzpicture}
\caption{Accuracy (\%) per difficulty on LGP-14 (Haiku).
SymStep+G (green) achieves 100\% at every level; CoT (blue) stays at 0\%.
Easy (4 puzzles), Medium (5), Hard (5).}
\label{fig:barchart}
\end{figure}

\section{SP-6 Per-Puzzle Breakdown}
\label{app:sp6detail}

\begin{table}[h!]
\centering\small
\setlength{\tabcolsep}{3pt}
\begin{tabular}{l ccccc}
\toprule
\textbf{Puzzle} & \textbf{Dir} & \textbf{CoT} & \textbf{SR} & \textbf{SS} & \textbf{SS+G}\\
\midrule
SP-E1-Workshop & \xmark & \xmark & \cmark & \cmark & \cmark \\
SP-E2-Lab      & \xmark & \xmark & \xmark & \cmark & \cmark \\
SP-M1-Projects & \xmark & \xmark & \cmark & \cmark & \xmark \\
SP-M2-Research & \xmark & \xmark & \xmark & \cmark & \xmark \\
SP-H1-Dept     & \xmark & \xmark & \xmark & \xmark & \cmark \\
SP-H2-EventCrew& \xmark & \xmark & \xmark & \cmark & \cmark \\
\midrule
\textbf{Total (6)} & 0 & 0 & 2 & 5 & 4 \\
\bottomrule
\end{tabular}
\caption{Per-puzzle outcomes on SP-6 (Haiku). SymStep+G fails on SP-M1 and
SP-M2 (medium puzzles with a natural sequential ordering that MRV hints
disturb), while SymStep succeeds.}
\label{tab:sp6_perpuzzle}
\end{table}

\section{AQUA-RAT: Domain Boundary Analysis}
\label{app:aquarat}

AQUA-RAT~\cite{ling2017aquarat} is a dataset of algebra word problems in multiple-choice format. We evaluate on a 35-problem sample to characterize the boundary of SymStep's advantage. Unlike grid puzzles and scheduling problems, AQUA-RAT questions require arithmetic reasoning over free-form algebraic relationships without the bijective constraint structure that the propagator exploits.

\begin{table}[h]
\centering\small
\setlength{\tabcolsep}{5pt}
\begin{tabular}{lcc}
\toprule
\textbf{Method} & \textbf{Acc.} & \textbf{Total}\\
\midrule
Direct             & \AQDpct  & \AQDtot\\
CoT                & \AQCpct  & \AQCtot\\
SymStep            & \AQSSpct & \AQSStot\\
\rowcolor{symgreen!10}
\textbf{SymStep+G} & \AQSGSpct & \AQSGStot\\
\bottomrule
\end{tabular}
\caption{Results on AQUA-RAT algebra MCQ (35 problems, Haiku).
SymStep+G matches CoT (both 89\%), confirming that SymStep's advantage is constraint-density-specific: where bijective constraint propagation is not the bottleneck, the symbolic verifier adds no benefit over standard reasoning.}
\label{tab:aquarat}
\end{table}

SymStep+G matches CoT at 89\%, while the unguided variant underperforms (66\%), since the structured \textsc{Deduce} protocol imposes overhead not compensated by propagation benefits on open-ended algebra MCQ.
This confirms the domain boundary: SymStep's advantage is specific to tasks where constraints are dense and arc-consistency cascades contribute meaningfully to the solution.


\begin{table}[h]
\centering\small
\setlength{\tabcolsep}{3pt}
\resizebox{\columnwidth}{!}{%
\begin{tabular}{lllll}
\toprule
\textbf{Method} & \textbf{Verifier type} & \textbf{Granularity} & \textbf{Guidance} & \textbf{Multi-domain}\\
\midrule
Logic-LM~\cite{pan2023logiclm}          & Symbolic solver  & Whole problem & --          & --\\
LINC~\cite{olausson2023linc}            & FOL prover       & Whole problem & --          & --\\
DVCOTR~\cite{ling2023deductive}         & LLM self-check   & Per step      & --          & --\\
Self-Refine~\cite{madaan2023selfrefine} & LLM self-check   & Whole output  & Vague       & --\\
\midrule
\textbf{SymStep}   & Propagator & Per step & --               & \checkmark\\
\textbf{SymStep+G} & Propagator & Per step & \textbf{MRV}     & \checkmark\\
\bottomrule
\end{tabular}}
\caption{Method comparison. SymStep+G uniquely combines a deterministic
per-step verifier, proactive MRV guidance, and a domain-agnostic design.}
\label{tab:design}
\end{table}


\begin{algorithm}[h]
\caption{SymStep and SymStep+G}
\label{alg:symstep}
\textbf{Input:} Puzzle $\mathcal{P}$, LLM $\mathcal{M}$, budget $T$, flag $\textsc{guide} \in \{\text{true,false}\}$\\
\textbf{Output:} Assignment $\hat\sigma$ or \textsc{fail}
\begin{algorithmic}[1]
\STATE Initialize propagator $\Pi$ with $\Pi(p,a) \leftarrow V_a$
\STATE $h \leftarrow \langle$puzzle text, \textsc{Deduce} format instructions$\rangle$
\FOR{$t = 1$ \TO $T$}
    \STATE $d_t \leftarrow \mathcal{M}(h)$ \hfill\COMMENT{one deduction}
    \IF{\texttt{CONCLUDE} $\in d_t$ \OR $\Pi$.\textsc{Solved}()}
        \STATE \textbf{return} $\hat\sigma(\Pi)$
    \ENDIF
    \STATE $(status,\; msg) \leftarrow \Pi.\textsc{Apply}(d_t)$
    \IF{$status = \textsc{ok}$}
        \STATE $fb \leftarrow $ ``\checkmark\, Verified. $msg$''
        \IF{$\textsc{guide}$}
            \STATE $fb \leftarrow fb +$ \textsc{MrvHint}($\Pi$) \hfill\COMMENT{SymStep+G}
        \ENDIF
    \ELSE
        \STATE $fb \leftarrow$ ``$\times$ Contradiction: $msg$. Please revise.''
    \ENDIF
    \STATE Append $fb$ to $h$
\ENDFOR
\STATE \textbf{return} \textsc{fail}
\end{algorithmic}
\end{algorithm}

\section{Formal Properties of the Constraint Propagator}
\label{app:proofs}

\begin{theorem}[Consistency]
Every assignment $\sigma(p,a)=v$ accepted by the propagator is
consistent with all previously accepted deductions.
The possibility matrix $\Pi$ satisfies: $v \in \Pi(p,a)$ iff $v$ has not
been ruled out by any accepted deduction at any prior step.
\end{theorem}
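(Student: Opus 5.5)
The plan is to argue by induction on the number of accepted deductions, maintaining a single invariant that yields both claims of the theorem. Formally, let $D_t$ be the multiset of accepted deductions after step $t$. This includes both LLM-supplied \textsc{Deduce} assertions that returned \textsc{ok} and the singleton assignments generated internally by the unit-propagation cascade. Define the set of values \emph{ruled out} for $(p,a)$ by $D_t$ as
\[
R_t(p,a)=\{v:(\sigma(p,a)\neq v)\in D_t\}\cup\{v':(\sigma(p,a)=v)\in D_t,\ v'\neq v\}\cup\{v:(\sigma(p',a)=v)\in D_t,\ p'\neq p\}.
\]
The invariant to prove is $\Pi_t(p,a)=V_a\setminus R_t(p,a)$ for all $p,a$, and $\Pi_t(p,a)\neq\emptyset$. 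The base case $t=0$ is immediate, since $\Pi_0(p,a)=V_a$ and $D_0=\emptyset$.

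For the inductive step, I would case-split on the three update rules of \S\ref{sec:propagator}.

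\emph{Negative update} $\sigma(p,a)\neq v$. Only $\Pi(p,a)$ changes, by removing exactly $v$. This matches the new element added to $R(p,a)$.

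\emph{Positive update} $\sigma(p,a)=v$. Setting $\Pi(p,a)\leftarrow\{v\}$ removes precisely $V_a\setminus\{v\}$, and deleting $v$ from every $\Pi(p',a)$ matches the third component of $R$. Because the previous $\Pi(p,a)$ already equalled $V_a\setminus R(p,a)$ and the update only intersects, the equality is preserved.

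\emph{Unit propagation.} Each cascaded singleton is treated as a positive update appended to $D$, so the same argument applies. Termination follows because every cascade step strictly shrinks $\sum|\Pi(p,a)|$.

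Whenever the update would produce an empty cell, the propagator returns \textsc{contradiction} and, by design, does not commit the update to $D$ or $\Pi$. I will state this rollback explicitly as the reading of the algorithm, so that rejected deductions leave the invariant untouched.

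Consistency then follows from the invariant. A positive deduction $\sigma(p,a)=v$ is accepted only if $v\in\Pi(p,a)$, which means $v\notin R(p,a)$. Unpacking $R$, this says no prior accepted deduction asserts $\sigma(p,a)\neq v$, asserts $\sigma(p,a)=v'$ with $v'\neq v$, or gives $v$ to another entity under bijectivity. These are exactly the ways a single atomic assertion can directly contradict the accepted set, so consistency in the sense of \S\ref{sec:why_stepwise} holds. Non-emptiness of every cell also shows that the accepted set never forces an empty domain under the propagation rules. The analogous statement for negative deductions is that acceptance requires the cell to remain nonempty.

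The main obstacle is pinning down what ``consistent'' and ``ruled out'' mean. Arc-consistency is not global satisfiability: an accepted set can pass every propagator check yet admit no complete bijective assignment. I would therefore state explicitly that consistency here is relative to the propagator's inference rules (pairwise and unit propagation), not model-theoretic satisfiability, in line with the remark in \S\ref{sec:propagator} that only consistency, not entailment, is guaranteed. With that definition, the iff clause is exactly the invariant above. The one genuinely delicate point is treating cascaded singletons as accepted deductions, which justifies the ``at any prior step'' phrasing.
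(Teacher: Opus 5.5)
Your proposal is correct and follows the paper's own route: induction on the deduction sequence, driven by the \textsc{ok}-only-if-$v\in\Pi(p,a)$ check and the empty-domain test. It also makes explicit what the paper's terse proof silently assumes, namely the invariant $\Pi_t(p,a)=V_a\setminus R_t(p,a)$, rollback of rejected updates, cascaded singletons counted as accepted deductions, and ``consistent'' meaning propagation-level rather than globally satisfiable. One minor correction to your termination remark: a cascade step need not shrink $\sum_{p,a}|\Pi(p,a)|$, since the singleton's value may already be absent from every other cell, so argue termination instead from each cell becoming determined and hence being cascaded at most once.
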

\begin{proof}
By induction on the deduction sequence.
\emph{Base}: Initially, $\Pi(p,a) = V_a$; the invariant holds trivially.
\emph{Step}: \textsc{Apply} returns \textsc{ok} only if $v \in \Pi(p,a)$, ensuring consistency.
Arc-consistency propagation discards $v$ from $\Pi(p',a)$ for all $p'\neq p$, maintaining bijectivity.
Any empty domain triggers \textsc{contradiction}, so no inconsistent state is ever accepted.
\end{proof}

\begin{theorem}[Propagation completeness]
If $|\Pi(p,a)|=1$ for some $(p,a)$ after accepting a deduction, the propagator
derives this assignment automatically without an additional LLM call.
The cascade reaches an arc-consistency fixed point in $O(n^2 k)$ steps.
\end{theorem}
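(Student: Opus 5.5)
The first claim follows directly from how unit propagation is defined in \S\ref{sec:propagator}, so I treat it briefly and spend most of the effort on the termination bound. The cascade is a worklist procedure. After \textsc{Apply} performs the positive or negative update, every cell $(p,a)$ whose set changed goes on a queue. Processing a cell checks whether $|\Pi(p,a)|=1$ and $(p,a)$ is not yet marked determined. If so, the singleton $\{v\}$ is applied as a positive update: $v$ is removed from $\Pi(p',a)$ for all $p'\neq p$, the cell is marked determined, and every modified cell is enqueued. The procedure stops when the queue is empty, which is the fixed point.

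First claim. I would argue by contradiction. Suppose that after acceptance some undetermined cell has $|\Pi(p,a)|=1$ and was never propagated. The last modification to $\Pi(p,a)$ put it on the queue. When it was dequeued, its set was already the current singleton, because it was not modified afterwards. So the rule fired and marked the cell determined, which is a contradiction. No LLM call happens inside this loop; it runs entirely within a single \textsc{Apply} in Algorithm~\ref{alg:symstep}. The consistency theorem above ensures that each value propagated this way is in $\Pi(p,a)$, so it never triggers a spurious contradiction. Any contradiction that does arise from an emptied cell is reported rather than accepted.

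Termination and the $O(n^2k)$ bound. I would use a potential argument on $\Phi=\sum_{p,a}|\Pi(p,a)|$. Initially $\Phi\le nk\cdot\max_a|V_a|=n^2k$, using bijectivity $|V_a|=n$. Every cell modification strictly decreases $\Phi$, and $\Phi\ge 0$, so there are at most $n^2k$ removals in total. A cell can be determined only once, so there are at most $nk$ singleton firings. Each firing scans the other $n-1$ entities for the same attribute, costing $O(n)$, for $O(n\cdot nk)=O(n^2k)$ in total. Each removal enqueues at most one cell, so queue work is bounded by the number of removals plus $nk$, which is again $O(n^2k)$. Combining these gives the stated bound.

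The main obstacle is interpreting ``arc-consistency fixed point.'' The propagator as described enforces only singleton elimination (the ``naked single'' rule). It does not do the dual ``hidden single'' rule, where a value $v$ survives in only one cell $\Pi(p,a)$ across the entities. I would prove the theorem for the fixed point of the rules the propagator actually implements. I would add a remark that including the dual rule preserves the same potential argument and the same $O(n^2k)$ bound, because each firing still either removes values or determines a cell, and the per-attribute counts of value occurrences can be maintained incrementally at $O(1)$ per removal. This is also where I would make the bijectivity assumption $|V_a|=n$ explicit.
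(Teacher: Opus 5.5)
Your proposal is correct and follows the paper's route. The paper's proof is only the termination argument via the strictly decreasing potential $\sum_{p,a}|\Pi(p,a)|$ (bounded below by $nk$), plus an appeal to standard arc-consistency analysis for the $O(n^2k)$ bound; your worklist invariant for the first claim and your explicit count (at most $n^2k$ removals using $|V_a|=n$, at most $nk$ singleton firings at $O(n)$ each) fill in what the paper leaves implicit or cites, and your worry about the phrase ``arc-consistency fixed point'' is unnecessary, since singleton elimination is exactly arc consistency for the pairwise decomposition $\sigma(p,a)\neq\sigma(p',a)$ of each bijective attribute, with the hidden-single rule relevant only under a generalized-arc-consistency reading of the full bijection.
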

\begin{proof}
Each cascade step strictly reduces the total domain size $\sum_{p,a}|\Pi(p,a)|$, which is bounded below by $nk$; thus, the cascade terminates. The $O(n^2 k)$ bound follows from standard arc-consistency analysis~\cite{mackworth1977ac,hoffmann2001ff}.
\end{proof}

\noindent\textbf{Soundness caveat.}
The propagator guarantees \emph{consistency}, not \emph{soundness}.
A consistent-but-incorrect deduction (valid against the prior state but not
entailed by the clues) can still be accepted.
The residual failure rate on hard puzzles reflects exactly this gap.



\section{LGP-20 Benchmark Description}
\label{app:lgp20}

LGP-20 adds six verified puzzles to LGP-14. Solutions are available in
the released code. Difficulty mirrors LGP-14:
\textbf{E5-Colors} (3 entities, 2 attrs, 4 clues),
\textbf{E6-School} (3 entities, 2 attrs, 5 clues),
\textbf{M6-Travel} (4 entities, 2 attrs, 8 clues),
\textbf{M7-Food} (4 entities, 2 attrs, 7 clues),
\textbf{H6-Full} (4 entities, 3 attrs, 10 clues), and
\textbf{H7-5People} (5 entities, 2 attrs, 9 clues).

\section{Full Puzzle Descriptions}
\label{app:puzzles}

All fourteen LGP-14 puzzles are listed below with their complete clue
sets; solutions are in Table~\ref{tab:solutions}.

\paragraph{Easy puzzles.}
\textbf{E1-Pets}: Alice, Bob, Carol; $\{$Red, Blue, Green$\}\times\{$Cat, Dog, Fish$\}$;
5 clues including ``Bob lives in the blue house'' and
``The person in the red house has a cat.''
\textbf{E2-Jobs}: Adam, Beth, Chris; $\{$Tea, Coffee, Water$\}\times\{$Doctor, Teacher, Engineer$\}$;
5 clues including ``The doctor drinks coffee.''
\textbf{E3-Sport}: Lena, Milo, Nora; $\{$Tennis, Swim, Run$\}\times\{$Paris, London, Tokyo$\}$;
4 clues including ``The person in London swims'' and ``Lena is not in Tokyo.''
\textbf{E4-Music}: Jake, Kim, Leo; $\{$Piano, Guitar, Drums$\}\times\{$Mon, Tue, Wed$\}$;
4 clues including ``The piano player practices on Wednesday.''

\paragraph{Medium puzzles.}
\textbf{M1-Houses}: Alice, Bob, Carol, Dave; $4\times\{$color, drink$\}$; 6 clues.
\textbf{M2-Nationals}: Ana, Ben, Carl, Donna; $4\times\{$nationality, pet$\}$; 8 clues (disambiguated).
\textbf{M3-Hobby}: Ivan, Jana, Karl, Lea; $4\times\{$color, hobby$\}$; 6 clues.
\textbf{M4-PetFruit}: Omar, Petra, Quinn, Rosa; $4\times\{$pet, fruit$\}$; 7 clues.
\textbf{M5-Cities}: Alice, Bob, Carol, Dave; $4\times\{$city, sport$\}$; 7 clues.

\paragraph{Hard puzzles.}
\textbf{H1-Full}: Alex, Blake, Casey, Drew; $4\times\{$color, drink, pet$\}$; 9 clues.
\textbf{H2-Full}: Pam, Quinn, Rosa, Sam; $4\times\{$color, job, pet$\}$; 9 clues.
\textbf{H3-5People}: Alex, Beth, Carl, Dana, Eric; $5\times\{$color, drink$\}$; 8 clues.
\textbf{H4-Full}: Kim, Lee, Mia, Ned; $4\times\{$color, job, pet$\}$; 10 clues.
\textbf{H5-Full}: Ash, Bay, Cole, Drew; $4\times\{$color, drink, pet$\}$; 10 clues.

\begin{table}[h]
\centering\small
\setlength{\tabcolsep}{3pt}
\resizebox{\columnwidth}{!}{%
\begin{tabular}{lll}
\toprule
\textbf{Puzzle} & \textbf{Entities} & \textbf{Solution (Attr1 / Attr2 / Attr3)}\\
\midrule
E1 & Alice / Bob / Carol & Red+Cat / Blue+Dog / Green+Fish\\
E2 & Adam / Beth / Chris & Tea+Eng. / Coffee+Doctor / Water+Teacher\\
E3 & Lena / Milo / Nora & Tennis+Paris / Swim+London / Run+Tokyo\\
E4 & Jake / Kim / Leo   & Guitar+Mon / Drums+Tue / Piano+Wed\\
M1 & Alice / Bob / Carol / Dave & Green+Tea / Red+Juice / Blue+Coffee / Yellow+Water\\
M2 & Ana / Ben / Carl / Donna & English+Cat / German+Fish / Spanish+Bird / French+Dog\\
M3 & Ivan / Jana / Karl / Lea & Blue+Dance / Green+Read / White+Paint / Red+Cook\\
M4 & Omar / Petra / Quinn / Rosa & Cat+Cherry / Parrot+Date / Snake+Banana / Fish+Apple\\
M5 & Alice / Bob / Carol / Dave & London+Run / Berlin+Tennis / Paris+Swim / Rome+Soccer\\
H1 & Alex / Blake / Casey / Drew & Green+Tea+Bird / Red+Juice+Dog / Blue+Coffee+Fish / Yellow+Water+Cat\\
H2 & Pam / Quinn / Rosa / Sam & Red+Teacher+Fish / Blue+Chef+Bird / Yellow+Doctor+Cat / Green+Eng.+Dog\\
H3 & Alex / Beth / Carl / Dana / Eric & White+Tea / Blue+Milk / Red+Coffee / Green+Juice / Yellow+Water\\
H4 & Kim / Lee / Mia / Ned & Red+Doctor+Cat / Green+Lawyer+Bird / Yellow+Teacher+Fish / Blue+Chef+Dog\\
H5 & Ash / Bay / Cole / Drew & Red+Tea+Cat / Blue+Coffee+Bird / Green+Juice+Dog / Yellow+Water+Fish\\
\bottomrule
\end{tabular}}
\caption{Ground-truth solutions for all LGP-14 puzzles.}
\label{tab:solutions}
\end{table}

\section{System Prompts}
\label{app:prompts}

All methods receive the same \textbf{problem header}:

\begin{quote}\small
\texttt{LOGIC PUZZLE}\\
\texttt{People: \{people\}}\\
\texttt{Attributes: \{attr (vals)\}}\\
\texttt{Clues: \{numbered clue list\}}
\end{quote}

\textbf{Direct} appends: ``Provide your final answer as: [person]:
attr=value, ...''

\textbf{CoT} appends: ``Think step by step through the clues to
determine each person's attributes. Show your reasoning, then give the
final answer.''

\textbf{Self-Refine}: the first call is Direct; the second call appends the
first answer and asks: ``Carefully verify each clue against your
answer. List any violated clues, correct them, and provide the revised
final answer.''

\textbf{SymStep / SymStep+G} append after the problem header:

\begin{quote}\small
You are solving a logic problem by making one deduction at a time.\\
Each deduction must follow EXACTLY one of these formats:\\
\quad\texttt{DEDUCE: <Person>, <attribute>, <Value>}\\
\quad\texttt{DEDUCE: <Person>, <attribute>, NOT <Value>}\\
After each deduction, you will receive verification feedback.\\
When the problem is fully solved, output: \texttt{CONCLUDE: done}\\
Do not output multiple deductions at once.
\end{quote}

Verifier feedback:
\begin{itemize}
  \item \textbf{Accepted}: \texttt{[Verifier]: \checkmark\ Correct
        deduction accepted.}
        (followed by an MRV hint if SymStep+G is used: \texttt{[Hint] X's attr must be one of: \{...\}})
  \item \textbf{Contradiction}: \texttt{[Verifier]: \textbf{\texttimes}\ CONTRADICTION --- \{exact message\}. Please reconsider.}
\end{itemize}

\section{Example SymStep+G Transcript}
\label{app:transcript}

The following is an abridged transcript of SymStep+G on puzzle E1-Pets
(solution: Alice=Red+Cat, Bob=Blue+Dog, Carol=Green+Fish).

\begin{quote}\small
\textbf{LLM:} \texttt{DEDUCE: Bob, color, Blue}\\
\textbf{Verifier:} \checkmark\ Verified. [Hint] Alice's color must be one of: \{Red, Green\}\\[2pt]
\textbf{LLM:} \texttt{DEDUCE: Carol, color, Green}\\
\textbf{Verifier:} \checkmark\ Verified. [Hint] Alice's color must be one of: \{Red\}\\[2pt]
\textbf{LLM:} \texttt{DEDUCE: Alice, color, Red}\\
\textbf{Verifier:} \checkmark\ Verified. [Hint] Alice's pet must be one of: \{Cat\}\\[2pt]
\textbf{LLM:} \texttt{DEDUCE: Alice, pet, Cat}\\
\textbf{Verifier:} \checkmark\ Verified. [Hint] Bob's pet must be one of: \{Dog, Fish\}\\[2pt]
\textbf{LLM:} \texttt{DEDUCE: Carol, pet, Fish}\\
\textbf{Verifier:} \checkmark\ Verified. [Problem solved!] Output CONCLUDE: done\\[2pt]
\textbf{LLM:} \texttt{CONCLUDE: done}
\end{quote}

After Step 2, the arc-consistency cascade automatically narrows Alice's
color to \{Red\}; the LLM never explicitly deduced this, yet the hint
reveals it. This cascade efficiency is the key benefit of unit propagation:
$\approx$60\% of assignments are derived automatically, without requiring
separate LLM calls.

\renewcommand{\refname}{Full Bibliography}
\bibliographystyle{named}
\bibliography{ijcai26}

\end{document}